\newtheorem{theorem}{Theorem}
\newcommand{\E}{\mathbb{E}}
\DeclareMathOperator*{\arginf}{arg\,inf}
\newcommand{\lpnorm}[2]{\left\lVert#1\right\rVert_{#2}}
\newcommand{\set}[2]{\left\{ #1 \mid #2 \right\}}
\newcommand{\divergence}[2]{D\!\left(#1 \middle\| #2 \right)}
\newcommand{\W}[2]{W\!\left(#1, #2 \right)}
\newcommand{\OP}[3]{W_{#3}\!\left(#1, #2 \right)}
\newcommand{\mean}[2]{\E_{#2} \! \left[ #1 \right]}
\newcommand{\de}[1]{\text{d}#1}
\newcommand{\restr}[2]{#1|_{#2}}
\title{k-GANs: Ensemble of Generative Models with Semi-Discrete Optimal Transport}
\author{
  Luca Ambrogioni \\
  Radboud University\\
  \texttt{l.ambrogioni@donders.ru.nl} \\
  \And
  Umut Güçlü \\
  Radboud University\\
  \texttt{u.guclu@donders.ru.nl} \
  \And
  Marcel A. J. van Gerven \\
  Radboud University\\
  \texttt{m.vangerven@donders.ru.nl} \\
}
\begin{document}

\maketitle
\begin{abstract}
Generative adversarial networks (GANs) are the state of the art in generative modeling. Unfortunately, most GAN methods are susceptible to mode collapse, meaning that they tend to capture only a subset of the modes of the true distribution. A possible way of dealing with this problem is to use an ensemble of GANs, where (ideally) each network models a single mode. In this paper, we introduce a principled method for training an ensemble of GANs using semi-discrete optimal transport theory. In our approach, each generative network models the transportation map between a point mass (Dirac measure) and the restriction of the data distribution on a tile of a Voronoi tessellation that is defined by the location of the point masses. We iteratively train the generative networks and the point masses until convergence. The resulting k-GANs algorithm has strong theoretical connection with the k-medoids algorithm. In our experiments, we show that our ensemble method consistently outperforms baseline GANs. 

\end{abstract}
\section{Introduction} 

Optimal transport theory is becoming an essential tool for modern machine learning research. The development of efficient optimal transport algorithms led to a wide range of machine learning applications \cite{peyre2017computational, cuturi2013sinkhorn, solomon2014wasserstein, kloeckner2015geometric, ho2017multilevel, arjovsky2017wasserstein, patrini2018sinkhorn, lee2018minimax, genevay18a, staib2017parallel, ambrogioni2018wasserstein, mi2018variational}. A notable example is approximate Bayesian inference, where optimal transport techniques have been used for constructing probabilistic autoencoders \cite{tolstikhin2017wasserstein, patrini2018sinkhorn} and for general purpose variational Bayesian inference \cite{ambrogioni2018wasserstein}. However, the field that has been most deeply influenced by optimal transport theory is arguably generative modeling \cite{arjovsky2017wasserstein, genevay18a, gulrajani2017improved, adler2018banach, genevay2017gan, gemici2018primal}. The introduction of the Wasserstein generative adversarial network (wGAN) \cite{arjovsky2017wasserstein} was a milestone as it provided a more stable form of adversarial training. Generative adversarial networks (GANs) greatly improved the state of the art in image generation. Nevertheless, GAN training often leads to mode collapse, where part of the data space is ignored by the generative model. A possible way to mitigate this phenomenon is to use a collection of GANs, each modeling a part of the data space \cite{wang2016ensembles}. However, existing ensembling techniques are often heuristic in nature and do not provide a principled way for ensuring that the different generators model non-overlapping parts of the data space. In this paper, we derive an ensemble of generative models algorithm from first principles using the theory of semi-discrete optimal transport. The basic idea is to jointly learn a series of elements (prototypes) of a discrete distribution and the optimal transportation functions from these prototypes to the data space. The notion of optimality is determined by a transportation cost that measures the dissimilarity between the prototypes and the data points.  The resulting k-GANs algorithm has strong theoretical connections with the k-means and k-medoids algorithms. In the k-GANs algorithm we learn $k$ prototypes that implicitly define a partitioning of the data space into non-overlapping cells. The distribution of the data in each of these cells is generated by a stochastic transportation function that maps the prototype into the data space. These transportation functions are parameterized by deep networks and are trained as regular GANs within their cell. The prototypes and the transportation functions are learned jointly so that the boundary of the cells shifts during training as a consequence of the changes in the prototypes.

\section{Related work}
From a theoretical point of view, our algorithm has a strong connection with the traditional k-means and k-medoid clustering methods \cite{forgy1965cluster, graf2007foundations}.
This connection between k-means and semi-discrete optimal transport stems from the fact that semi-discrete transport problems implicitly define a Laguerre tessellation of the space, which reduces to the more familiar Voronoi tessellation in special cases \cite{peyre2017computational, graf2007foundations}. Recently, this connection has been exploited in a variational clustering algorithm which uses optimal transport theory in order to derive a more powerful clustering method \cite{mi2018variational}.

\section{Background on optimal transport} 
In machine learning and statistics, optimal transport divergences are often used for comparing probability measures. Consider two probability measures $\nu(\de{x})$ and $\nu(\de{y})$. The optimal transport divergence between them is defined by the following optimization problem:
\begin{equation}
\OP{\mu}{\nu}{c} = \inf_{\gamma(\de{x},\de{y}) \in \Gamma} \int_{\mathcal{X} \times \mathcal{Y}} c(x,y) \gamma(\de{x},\de{y})~,  
\end{equation}
where $c(x,y)$ is the cost of transporting probability mass from $x$ to $y$ and $\Gamma$ is the set of probability measures that have $\nu(\de{x})$ and $\mu(\de{y})$ as marginal measures over $x$ and $y$ respectively. The transportation nature of this problem can be seen by slightly reformulating the objective function by writing the joint measure $\gamma(\de{x},\de{y})$ as the product of a conditional measure $\gamma(\de{x}|y)$ and of the marginal measure $\mu(\de{y})$:
\begin{equation}
\OP{\mu}{\nu}{c} = \inf_{\gamma(\de{x}|y) \in \Gamma_\nu} \int_{\mathcal{Y}} \left( \int_{\mathcal{X}} c(x,y) \gamma(\de{x}|y) \right) \mu(\de{y})~, 
\end{equation}
where the integral of the conditional measures $\gamma(\de{x}|y)$ has to be equal to $\nu(\de{x})$:
\begin{equation}
\Gamma_\nu = \set{\gamma(\de{x}|y)}{\int_\mathcal{Y} \gamma(\de{x}|y) \mu(\de{y}) = \nu(\de{x})}~. \label{eq: marginalization constraint}
\end{equation}
Therefore, the conditional measures $\gamma(\de{x}|y)$ can be interpreted as stochastic transportation functions that map each element of $y$ into a probability measure over $x$. 

\section{Semi-discrete optimal transport as ensemble of generative models} \label{sec: semi-discrete as ensemble}
One of the main advantages of using optimal transport divergences is that they can be defined between probability measures with very different support. An important case is semi-discrete transport, where $\nu(\de{x})$ is is absolutely continuous with respect to Lebesgue measure $\de{x}$ while $\mu(\de{y})$ is a Dirac measure:
$$
\mu(\de{y}) = \sum_j w_j \delta_{y_j}(\de{y})~.
$$
Semi-discrete optimal transport has important machine learning applications. For our purposes, the minimization of a semi-discrete optimal transport divergences can be used for approximating the probability distribution of the data with a discrete distribution over a finite number of ``prototypes''. The semi-discrete optimal transport divergence can be rewritten as follows:
\begin{align} \label{eq: semi-discrete optimal transport}
\OP{\nu}{\mu}{c} &= \inf_{\gamma(x|y)} \int_\mathcal{Y} \left( \int_\mathcal{X} c(x,y) \gamma(\de{x}|y) \right) \mu(\de{y})\\ 
&= \sum_j w_j \inf_{\gamma(x|y_j)} \int_\mathcal{X} c(x,y_j) \gamma(\de{x}|y_j)~, \label{eq: semi-discrete transport}
\end{align}
with the following constraint:
\begin{equation}\label{eq: marginalization constraint II}
\nu(\de{x}) = \sum_j  w_j \gamma(\de{x}|y_j)~.
\end{equation}
Note that each conditional measure $\gamma(\de{x}|y_j)$ can be interpreted as a generative model that maps a prototype $y_j$ into a probability distribution over the data points $x$. The optimization in Eq.~\ref{eq: semi-discrete transport} assures that these distributions are centered around their prototype (in a sense given by the cost function) while the marginalization constraint in Eq.~\ref{eq: marginalization constraint II} guarantees that the sum of all generative models is the real distribution of the data. In other words, the solution of this semi-discrete optimal transport problem provides an ensemble of local generative models. 

\section{Geometry of semi-discrete optimal transport problems}
In this section we will summarize some known results of semi-discrete optimal transport that will provide the theoretical foundation for our work. Semi-discrete optimal transport has a deep connection with geometry as it can be proven that the transportation maps are piecewise constant and define a tessellation of the space $\mathcal{X}$. In order to show this, it is useful to introduce the unconstrained dual formulation of the optimal transport problem \cite{peyre2017computational}:
\begin{equation}
\OP{\mu}{\nu}{c} = \sup_{\boldsymbol{g}} \int_\mathcal{X} g^c(x) \nu(\de{x}) + \sum_j g_j w_j~, \label{eq: dual optimization}
\end{equation}
where $g^c(x)$ denotes the c-transform of the vector of dual weights $\boldsymbol{g}$:
\begin{equation}
    g^c(x) = \inf_j \left(c(x,y_j) - g_j \right)~. \label{eq: c-transform}
\end{equation}
We can now reformulate the objective function in terms of a tessellation of $\mathcal{X}$:
\begin{equation}
\OP{\mu}{\nu}{c} = \sup_{\boldsymbol{g}} \mathcal{E}(\boldsymbol{g}) = \sup_{\boldsymbol{g}} \sum_j \int_{L_j(\boldsymbol{g})} \left( c(x,y_j) - g_j \right) \nu(\de{x}) + \sum_j g_j w_j~, \label{eq: dual optimization II}
\end{equation}
where the sets $L_j(g_j)$ are defined as
$$
L_j(\boldsymbol{g}) = \set{x}{\forall{k},~~ c(x,y_j) - g_j < c(x,y_k) - g_k}~,
$$
yielding a so-called Laguerre tessellation of $\mathcal{X}$. We can finally state the following important theorem, expressing the transportation maps in terms of the optimized Laguerre tessellation:
\begin{theorem}
The optimal transportation maps in the optimization problem in Eq.~\ref{eq: semi-discrete optimal transport} are given by the following formula:
\begin{equation}
    \hat{\gamma}(\de{x}|y_j) = \restr{\nu}{L_j(\hat{\boldsymbol{g}})}(\de{x})~,
\end{equation}
where $\restr{\nu}{L_j(\hat{\boldsymbol{g}})}$ denotes the renormalized restriction of the probability measure $\nu$ to the set $L_j(\hat{\boldsymbol{g}})$ and $\hat{\boldsymbol{g}}$ is the solution of the optimization in Eq.~\ref{eq: dual optimization}. \label{th 1}
\end{theorem}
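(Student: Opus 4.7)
The strategy is to exhibit a primal-feasible transport plan whose cost equals the dual optimum $\mathcal{E}(\hat{\boldsymbol{g}})$ of Eq.~\ref{eq: dual optimization II}, so that weak duality forces it to be optimal. I take as candidate the restriction $\hat\gamma(\de{x}|y_j) = \restr{\nu}{L_j(\hat{\boldsymbol{g}})}(\de{x})$ from the statement.

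\emph{Step 1: Feasibility.} I first verify the marginal constraint in Eq.~\ref{eq: marginalization constraint II}. Because the cells $L_j(\hat{\boldsymbol{g}})$ in Eq.~\ref{eq: c-transform} are defined by strict inequalities, they are pairwise disjoint and their union misses only the tie set where the infimum defining $g^c$ is attained at more than one index. This tie set is a locally finite union of hypersurfaces, hence has Lebesgue measure zero, and therefore $\nu$-measure zero since $\nu$ is absolutely continuous. Feasibility then reduces to the cell-mass identity $\nu(L_j(\hat{\boldsymbol{g}})) = w_j$ for every $j$, which I would derive from first-order optimality of $\hat{\boldsymbol{g}}$: differentiating $\mathcal{E}$ in Eq.~\ref{eq: dual optimization II} with respect to $g_j$, the boundary contributions from the moving cells cancel because on each interface $c(x,y_j)-g_j=c(x,y_k)-g_k$, leaving $\partial_{g_j}\mathcal{E}(\boldsymbol{g}) = w_j - \nu(L_j(\boldsymbol{g}))$, which must vanish at the optimum.

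\emph{Step 2: Matching primal and dual values.} Plugging $\hat\gamma$ into the primal, the cost collapses to $\sum_j \int_{L_j(\hat{\boldsymbol{g}})} c(x,y_j)\,\nu(\de{x})$. On the other hand, by the definition of the $c$-transform in Eq.~\ref{eq: c-transform} we have $g^c(x) = c(x,y_j)-\hat g_j$ for $x \in L_j(\hat{\boldsymbol{g}})$, so the dual optimum equals $\sum_j \int_{L_j(\hat{\boldsymbol{g}})}(c(x,y_j)-\hat g_j)\,\nu(\de{x}) + \sum_j \hat g_j w_j$. The cross-terms $-\sum_j \hat g_j \nu(L_j(\hat{\boldsymbol{g}}))$ and $+\sum_j \hat g_j w_j$ cancel by the identity of Step~1, so the primal cost of $\hat\gamma$ equals $\mathcal{E}(\hat{\boldsymbol{g}})$. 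Weak Kantorovich duality $\OP{\mu}{\nu}{c} \geq \mathcal{E}(\boldsymbol{g})$ for every admissible $\boldsymbol{g}$ and every feasible plan then forces $\hat\gamma$ to be a primal minimizer.

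The main obstacle I expect is the rigorous justification of the envelope-type differentiation in Step~1: the cells $L_j(\boldsymbol{g})$ depend non-smoothly on $\boldsymbol{g}$ and the cancellation of boundary terms must be argued carefully from $\nu \ll \text{Lebesgue}$ together with mild regularity of $c$. Rather than reprove concavity and smoothness of $\mathcal{E}$ from scratch, I would appeal to the standard semi-discrete optimal transport results surveyed in \cite{peyre2017computational} to discharge these analytic details, keeping the body of the proof focused on the algebraic identity between primal and dual values.
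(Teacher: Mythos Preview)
Your proposal is correct and follows essentially the same route as the paper: differentiate the dual objective $\mathcal{E}$ (citing \cite{peyre2017computational}) to obtain the cell-mass identity $\nu(L_j(\hat{\boldsymbol{g}}))=w_j$, plug this back so the $\hat g_j$-terms cancel, and then identify the resulting expression with the primal cost of the restricted measures. The only difference is that you spell out feasibility and invoke weak duality explicitly, whereas the paper relies on strong duality having already been asserted in Eq.~\ref{eq: dual optimization} and simply ``compares'' the two expressions.
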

\begin{proof}
The derivative of Eq.~\ref{eq: dual optimization II} with respect to $b_j$ is given by \cite{peyre2017computational}:
\begin{equation}
    \frac{\partial \mathcal{E}(\boldsymbol{g})}{\partial g_j} = - \int_{L_j(\boldsymbol{g})} \nu(\de{x}) + w_j~.
\end{equation}
Since the problem is unconstrained, this implies that, for the optimal dual weights $\hat{\boldsymbol{g}}$, we have 
\begin{equation}
    \int_{L_j(\hat{\boldsymbol{g}})} \nu(\de{x}) = w_j~.
\end{equation}
By plugging this result into Eq.~\ref{eq: dual optimization II}, we obtain:
\begin{align}
\OP{\mu}{\nu}{c} &= \sum_j \int_{L_j(\hat{\boldsymbol{g}})} c(x,y_j) \nu(\de{x}) \\
&= \sum_j w_j \int_{\mathcal{X}} c(x,y_j) \restr{\nu}{L_j(\hat{\boldsymbol{g}})}(x)~. 
\end{align}
By comparing this expression with the primal formulation in Eq.~\ref{eq: semi-discrete transport}, it follows immediately that the optimal transportation maps are given by the measures $\restr{\nu}{L_j(\hat{\boldsymbol{g}})}$.

\end{proof}

\section{Simultaneous optimization of weights and transportation maps}
In this section, we prove the main theoretical result behind the method. Consider the problem of finding the set of prototypes, weights and transportation maps that minimize the semi-discrete optimal transport divergence in Eq.~\ref{eq: semi-discrete optimal transport}. This results in the following joint optimization problem:
\begin{align}\label{eq: joint-optimization}
\arginf_{y_j, w_j, \gamma(x|y_j)}  \sum_j w_j \int_\mathcal{X} c(x,y_j) \gamma(\de{x}|y_j)~.
\end{align}
The solution of this optimization problem is an optimal ensemble of generative models. 

From the previous section, we know that the solution of the semi-discrete optimal transport problem is given by a tessellation of the target space into Laguerre cells $L_j(\boldsymbol{g})$ parameterized by a vector of dual weights $\boldsymbol{g}$. These cells are the support sets of the transportation maps. In the general case, these cells can be computed using computational geometry algorithms \cite{peyre2017computational}. Fortunately, the problem can be solved in closed form if we simultaneously optimize the weights and the transportation maps, as stated in the following theorem:

\begin{theorem}[Formal solution of the joint optimization problem] \label{th: formal solution}
The optimization problem given by
\begin{equation}
\arginf_{w_j, \gamma(\de{x}|y_j)} \sum_j w_j \int_\mathcal{X} c(x,y_j) \gamma(\de{x}|y_j)~,
\end{equation} \label{eq: minimax weights}
under the marginalization constraint given in Eq.~\ref{eq: marginalization constraint} is solved by the following Voronoi tessellation:
\begin{equation}
V_j = \set{x}{\forall{k},~~ c(x,y_j) < c(x,y_k)}~, \label{eq: voronoi set}
\end{equation}
where transportation maps are obtained by restricting the data distribution $\nu(\de{x})$ to each set of the tessellation
\begin{equation}
\hat{\gamma}(\de{x}|y_j) = \restr{\nu}{V_j}(\de{x}) \label{eq: minimax transporation maps}
\end{equation}
and the optimal weight are given by
\begin{equation}
\hat{w}_j = \int_{V_j} \nu(\de{x})~.
\end{equation}

\end{theorem}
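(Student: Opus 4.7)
The plan is to reduce the joint optimization to a simple pointwise allocation problem by introducing the composite measures $\pi_j(\de{x}) := w_j \gamma(\de{x}|y_j)$. Under this change of variables each $\pi_j$ is a non-negative measure, the marginalization constraint from Eq.~\ref{eq: marginalization constraint} becomes $\sum_j \pi_j(\de{x}) = \nu(\de{x})$, and the objective rewrites as the linear functional
\begin{equation}
\sum_j \int_\mathcal{X} c(x,y_j)\, \pi_j(\de{x})~.
\end{equation}
The weights $w_j = \pi_j(\mathcal{X})$ and the conditional transport maps $\gamma(\de{x}|y_j) = \pi_j(\de{x})/w_j$ can then be read off from any optimal $\pi_j$.

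The key step is to recognize this reformulation as an elementary splitting problem: for each point $x \in \mathcal{X}$ we must decide how to distribute the infinitesimal mass $\nu(\de{x})$ among the $\pi_j$, with per-unit cost $c(x,y_j)$. Since the cost is linear in the allocation, the optimum assigns all of the mass at $x$ to the index $j^*(x) = \arg\min_j c(x,y_j)$. This produces exactly $\pi_j = \mathbb{1}_{V_j} \cdot \nu$ with $V_j$ the Voronoi cell in Eq.~\ref{eq: voronoi set}. Translating back through the change of variables immediately yields $\hat{w}_j = \nu(V_j)$ and $\hat{\gamma}(\de{x}|y_j) = \restr{\nu}{V_j}(\de{x})$, matching the statement of the theorem. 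As a consistency check, the result can be related to Theorem~\ref{th 1}: Voronoi tessellations are precisely the Laguerre tessellations with equal (or vanishing) dual weights, which is the configuration in which the derivative $\partial \mathcal{E}/\partial g_j$ is used not to fix $g_j$ but to determine the free parameter $w_j$, so the dual weights play no role.

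The main obstacle is handling the boundary of the Voronoi cells, namely the set of $x$ where $c(x,y_j) = c(x,y_k)$ for some $j \ne k$. On this set the pointwise argmin is not unique and the assignment to $\pi_j$ is arbitrary, so the non-strict inequality in Eq.~\ref{eq: voronoi set} must be handled with care. Under the standard mild assumptions on $c$ and $\nu$ used in semi-discrete transport (e.g.\ $\nu$ absolutely continuous with respect to Lebesgue and $c$ sufficiently regular), this boundary has $\nu$-measure zero, so the non-uniqueness does not affect the value of the infimum and the stated $\hat{\gamma}(\de{x}|y_j)$ is well defined up to a $\nu$-null set. I would note this subtlety and otherwise keep the proof at the level of the linearity argument above.
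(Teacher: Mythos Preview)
Your argument is correct but proceeds along a genuinely different route from the paper. The paper works on the \emph{dual} side: it applies a Lagrange multiplier for the simplex constraint on $\boldsymbol{w}$ to the dual objective $\mathcal{E}(\boldsymbol{g})$ of Eq.~\ref{eq: dual optimization II}, obtains the stationarity conditions $\partial\mathcal{L}/\partial g_j = -\nu(L_j(\boldsymbol{g})) + w_j = 0$ and $\partial\mathcal{L}/\partial w_j = g_j - \lambda = 0$, and from the latter concludes that all dual weights $g_j$ are equal, so the Laguerre cells degenerate to Voronoi cells; the transportation maps then come from invoking Theorem~\ref{th 1}. Your approach stays entirely on the \emph{primal} side: the substitution $\pi_j = w_j\,\gamma(\cdot\mid y_j)$ linearizes the problem and reduces it to a pointwise mass-allocation whose optimum is the nearest-prototype rule, yielding the Voronoi cells directly via the bound $\sum_j\int c(x,y_j)\,\pi_j(\de{x}) \ge \int \min_j c(x,y_j)\,\nu(\de{x})$. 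Your argument is more elementary and self-contained (no dual formulation, no Theorem~\ref{th 1}, no critical-point analysis of a saddle problem), and it also makes the boundary/null-set issue explicit, which the paper glosses over. The paper's route, on the other hand, makes the connection to the Laguerre framework transparent: it shows precisely that optimizing over $\boldsymbol{w}$ forces the dual variables $g_j$ to collapse to a constant, which is the mechanism by which Laguerre becomes Voronoi---a structural insight your primal argument recovers only as the consistency check you mention at the end.
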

 \begin{proof}
 It is easier to work with the dual optimization problem (see Eq.~\ref{eq: dual optimization II}). Enforcing the fact that the weight vector $\boldsymbol{w}$ should sum to one using Lagrange multipliers, we obtain the following unconstrained minimax optimization:
\begin{align}
&\inf_{\lambda} \inf_{\boldsymbol{w}} \sup_{\boldsymbol{g}} \mathcal{L}(\boldsymbol{w}, \boldsymbol{g}, \lambda) \\
&= \inf_{\lambda}\inf_{\boldsymbol{w}} \sup_{\boldsymbol{g}} \sum_j \int_{L_j(\boldsymbol{g})} \left( c(x,y_j) - g_j \right) \nu(\de{x}) + \sum_j g_j w_j + \lambda (1 - \sum_j w_j) ~. 
\end{align}
We can find the critical point by setting the gradient to zero:
\begin{align}
    &\frac{\partial \mathcal{L}}{\partial g_j} = - \int_{L_j(\boldsymbol{g})} \nu(\de{x}) + w_j = 0\\
    &\frac{\partial \mathcal{L}}{\partial w_j} = g_j - \lambda~. \\
\end{align}
The second equation implies that all the dual weights are equal to a constant. This implies that the Laguerre sets $L_j(\lambda)$ are Voranoi sets (Eq.~\ref{eq: voronoi set}). The first equation gives Eq.~\ref{eq: minimax weights} and the transportation maps in Eq.~\ref{eq: minimax transporation maps} are a consequence of Theorem~\ref{th 1}. Note that the resulting weights clearly respect the marginalization constraint.

\end{proof}
Using Theorem~\ref{th: formal solution}, we can write a simple expression for the optimal prototypes:
\begin{align}\label{eq: optimal prototypes}
\hat{y}_j = \arginf_{y_j} \int c(x,y_j) \restr{\nu}{V_j}(\de{x})~.
\end{align}
In other words, the prototypes are the medoid of the Voronoi sets with respect to the cost function $c(x, y)$.

\section{Learning the transportation maps}
The solution given in Eq.~\ref{eq: joint-optimization} is purely formal and does not directly provide a useful algorithm since the distribution that generated the data is not available. However, a practical algorithm can be obtained by minimizing a statistical divergence between this formal solution and a parametric model, such as a deep generative network. We will denote the probability measure induced by passing a latent measure $p(\de{z})$ though a deep network $F$ as ${F}_* p$ (where the bottom star denotes the pushforward of a measure through a measurable function). We approximate each optimal transportation map $\restr{\nu}{L_j}(\de{x})$ as follows:
$$
q_j(\de{x}) =  {F_j}_* p_j~.
$$
In practice, in the most naive implementation, this means that we reject samples that land outside $L_j$.
We train each network $F_j$ by minimizing a statistical divergence:
\begin{equation}\label{eq: divergence minimization}
\mathcal{L}_j = \divergence{\restr{\nu}{L_j}}{q_j}~.
\end{equation}
This is possible when the divergence $D$ solely requires the ability to sample from $\restr{\nu}{L_j}$ since we can sample from the dataset and reject all samples that do not land in $L_j$. For example, using the dual formulation of the Wasserstein distance, we can train the generators (Wasserstein GANs) by optimizing the following minimax problem using stochastic gradient descent (SGD) \cite{arjovsky2017wasserstein}:
\begin{equation}
\inf_F \sup_{f \in L^1} \W{\restr{\nu}{L_j}}{q_j}~ = \inf_F \sup_{f \in L^1} \left( \mean{f(x)}{q_j(\de{x})} - \mean{f(x)}{\restr{\nu}{L_j}} \right). \label{eq: wasserstein gan}
\end{equation}
where $L^1$ is the space of Lipschitz continuous functions. In practice, we approximate the samples $ $ with samples from a finite dataset and we parameterized both $F$ and $f$ as deep neural networks. Furthermore, $f$ is regularized by the following soft Lipschitz regularization term:
\begin{equation}\label{eq: regularization}
\mathcal{R}[F] = \gamma \mean{\text{ReLu}(|f(x) - f(y)| - 1} {x, y \sim_{\text{iid}} \restr{\nu}{L_j}}~.
\end{equation}
Using the trained generators, we can obtain a parameterized proxy for the loss in Eq.~\ref{eq: optimal prototypes} that we will use to train the prototypes:
\begin{align}\label{eq: optimal prototypes, proxy loss}
\mathcal{W}_j[y_j] = \int c(x,y_j) q_j(\de{x})~.
\end{align}

\section{The k-GANs algorithm} 
We are finally ready to formulate the algorithm. The basic idea is to minimize Eq.~\ref{eq: joint-optimization} using a two step approach similar to the expectation maximization scheme used in the k-means algorithm \cite{forgy1965cluster}. In the first step, we keep the generators fixed and we train the prototypes by minimizing Eq.~\ref{eq: optimal prototypes, proxy loss} with $n$ SGD steps. In the second step, we keep the prototypes (and consequently the tessellation) fixed and we train the generators by minimizing Eq.~\ref{eq: wasserstein gan} with $m$ SGD steps (cf. Algorithm~\ref{alg:kGANs}).

We named this algorithm k-GANs since it can be interpreted as a parametric version of the well-known k-medoids method. Specifically, a stochastic version of k-medoids is obtained if we replace the trained deep generators $F_j$ with  nonparametric generators $G_j$ that sample with uniform probability the elements of the training set that belongs to the $L_j$ set. This further reduces to a stochastic version of the k-means algorithm is we use the squared euclidean distance as cost function.

\begin{algorithm}
  \caption{k-GANs. k: Number of GANs, N: Number of epochs, M: Number of iterations per epoch. }\label{alg:kGANs}
  \begin{algorithmic}[1]
  \Procedure{k-GANs}{$k, N, M$}
    \State \texttt{Initialize k generators}
    \State \texttt{Initialize k discriminators}
    \State \texttt{Initialize the prototypes}
    \For{\texttt{n from 1 to N}} \Comment{loop over epochs}
      \For{\texttt{j from 1 to k}} \Comment{loop over GANs}
          \For{\texttt{m from 1 to M}} \Comment{loop over iterations}
          \State $\text{batch} \sim \text{Dataset}$ 
          \State $\text{batch}_j = [x ~ \text{for} ~x ~\text{in} ~ \text{batch} ~ \text{if} ~ x ~ \text{in} ~ V_j(\text{prototypes})]$ \Comment{reject outside the set $V_j$}
        \State \texttt{Train discriminator and generator using $\text{batch}_j$} \Comment{(Eq.~\ref{eq: wasserstein gan}, \ref{eq: regularization})}
        \State \texttt{Train prototypes using samples from the generator} \Comment{(Eq.~\ref{eq: optimal prototypes, proxy loss})}
        \EndFor
      \EndFor
    \EndFor
  \EndProcedure
  \end{algorithmic}
\end{algorithm}

\subsection{The k-generators algorithm} 
The theory outlined in this papar is not specific to GANs and can be directly applied to any generative model based on the minimization of a statistical divergence. For example, the approach can be used with variational autoencoders \cite{kingma2014auto} and sequential generative models such as those used in natural language processing \cite{sundermeyer2012lstm}.

\section{Choosing the cost function}
The clustering behavior of the k-GANs algorithm depends on the choice of the cost function $c(x,y)$. The shape of the cost determines the boundaries between the sets of the Voronoi tessellation. These boundaries are in general curved, except when $c(x,y)$ is a monotonic function of a quadratic form. 

\subsection{$L_p$. Euclidean and feature costs}
The simplest choice for the cost function is given by the $p$-th power of a $l_p$ norm:
\begin{equation}
    c_p(x, y) = \lpnorm{x - y}{p}^p~.
    \label{eq: lp norm}
\end{equation}
The boundaries induced by this family of norms are very well-studied and leads to different clustering behaviors \cite{hathaway2000generalized}. The most common choice is of course the familiar $L_2$ norm which to the familiar (Euclidean) k-means clusters. However, $l_p$ norms can lead to sub-optimal clustering in highly structured data such as in natural images as the boundaries tend to be driven by low-level features and ignore semantic information. A possible way of basing the partitioning on more semantic feature is to consider a $l_p$ norm in an appropriate feature space:
\begin{equation}
    c_p^{(f)}(x, y) = \lpnorm{f(x) - f(y)}{p}^p~,
    \label{eq: feature lp norm}
\end{equation}
where the feature map $f$ maps the raw data to a feature space. Usually, $f$ is chosen to been a deep network trained on a supervised task.

\subsection{Semi-supervised costs}
Another interesting way to insert semantic information into the cost function is to use labels on a subset of data. For example, we can have a cost of the following form:
\begin{equation}
    c_{ss}(x, y) = \theta(\mathfrak{l}(x), \mathfrak{l}(y)) c(_p(x,y) ~,
    \label{eq: semi-supervised}
\end{equation}
where the function $\mathfrak{l}(x)$ assign a discrete value based on whether the data-point $x$ is labeled and on its label. The function $\theta$ then scales the loss based on this label information. For example, $\theta$ can be equal to $0.1$ when two data-points have the same label, equal to $10$ when datapoints have different labels and equal to one when one or both of the datapoints are unlabeled. Note that, in order to use this semi-supervise cost in the k-GANs algorithm we need to be able to assign a label on the prototypes. A possibility is to train a classifier on the labeled part of the dataset. Alternatively, we can simply select the label of the closest labeled data-point. 


\section{Experiments}
In this section we validate the k-GANs method on a clustered toy dataset and on two image datasets: MNIST and fashion MNIST \cite{xiao2017fashion, lecun1998gradient}. We compare the performance of the k-GANs example against the performance of individual GANs. In all our experiments we used the Euclidean distance as cost function.

\begin{figure}[ht]
    \centering
    \includegraphics[width=0.7\textwidth]{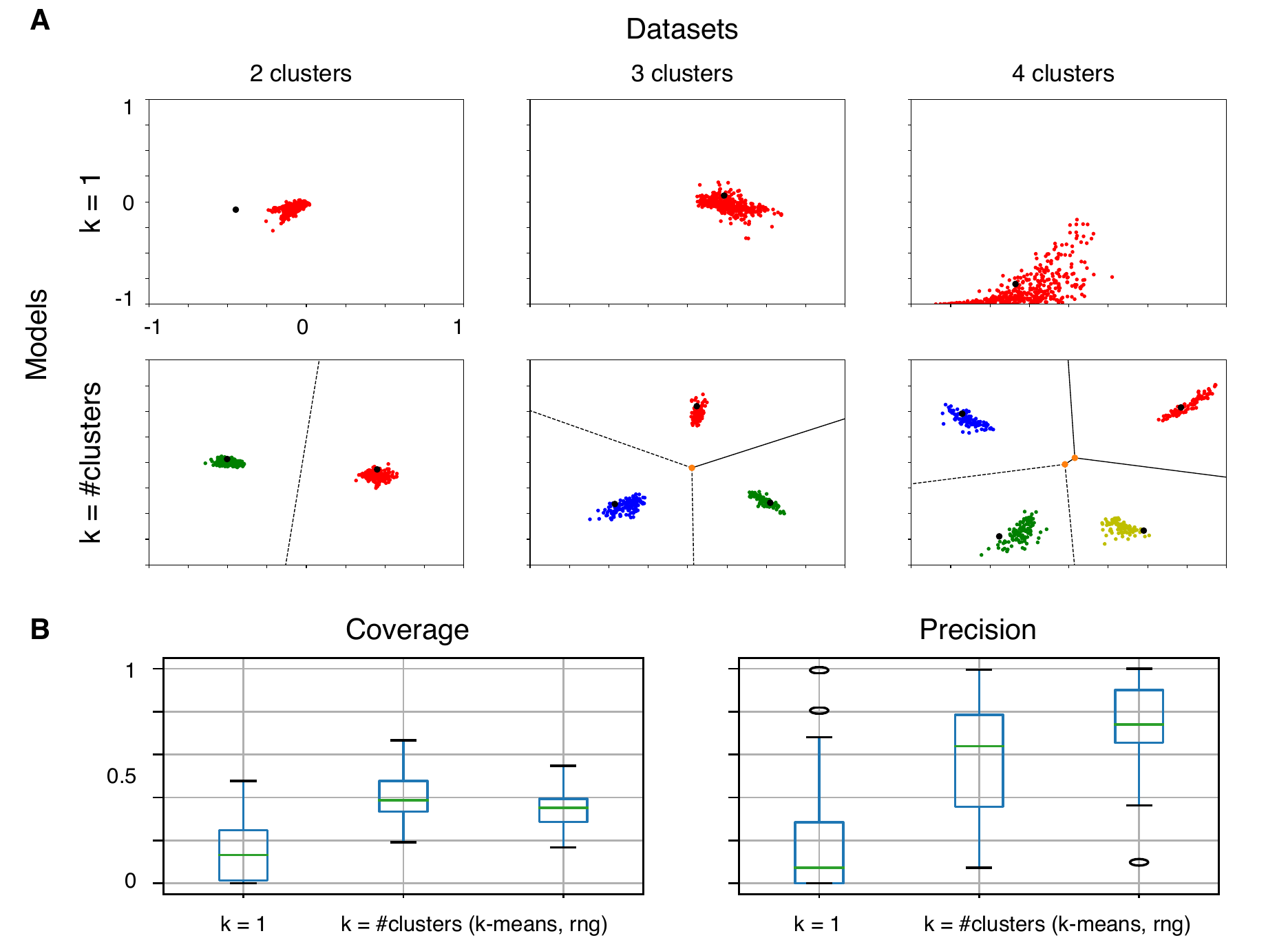}
    \caption{Results of the experiments in the toy dataset. A) Generated samples and Voronoi partition induced by the prototypes. The top row shows the result of the Wasserstein GAN baseline while the bottom shows the results for the k-GANs. B) Coverage and precision of the generated samples ensembled over the three toy datasets.}
    \label{figure 1}
\end{figure}

\subsection{Toy dataset}
We constructed several toy datasets (TD) comprised of randomly sampled coordinates on 2D plane, which were masked to create circular clusters. The first TD had two circular clusters of data points that fell within a radius of 0.25, centered on (-0.5, 0) and (0.5, 0); similarly the second TD had three circular clusters of data points, centered on (-0.5, -0.5), (0.5, -0.5) and (0, 0.5); and finally the third TD had four circular clusters centered on (-0.5, -0.5), (0.5, -0.5), (0.5, 0.5) and (-0.5, 0.5). We trained a Wasserstein k-GANs for $k$ ranging from 1 (baseline) to 4. We repeated the experiment 10 times. We used a 10-dimensional latent space for each of our generators. The generator network architecture was constructed as follows: a fully connected input layer of 32 units with batch normalization and leaky ReLU activation function, followed by fully connected layers of 16, 8 and finally 2 units. First two layers had batch normalization and leaky ReLUs, while the last one had sigmoid activation. The discriminator network had two fully connected layers with 16 and 8 units. For optimization, we used Adam with $\alpha = 10^{-4}$ for the generator and the discriminator networks, and $\alpha = 10^{-3}$ for the prototype. A burn-in parameter of 600 was introduced to the 60 000 iterations of training of each prototype and the corresponding generator/discriminator networks by minimizing Wasserstein distance. Figure~\ref{figure 1} shows the resulting tessellation and the sampled produced by each generator. The case corresponding to k equal to one is the baseline Wasserstein GAN. We evaluated the performance of the methods using two metrics: coverage and precision. Coverage is quantified by binning the plane in a 2D grid and counting the fraction of bins inside the circular masks that contain a generated data-point. The precision metric is given by the fraction of generated datapoints that are inside the masks. We compared a GAN baseline with two k-GANs runs where k was set equal to the number of clusters in the dataset. In one of these two runs the prototypes were initialized using k-means on the generated data while in the other they were sampled randomly from uniform distributions ranging from -1 to 1. Figure~\ref{figure 1} shows the metrics for all methods. Both k-GANs methods reach significantly higher performance than the baseline. 

\begin{figure}[ht]
    \centering
    \includegraphics[width=0.7\textwidth]{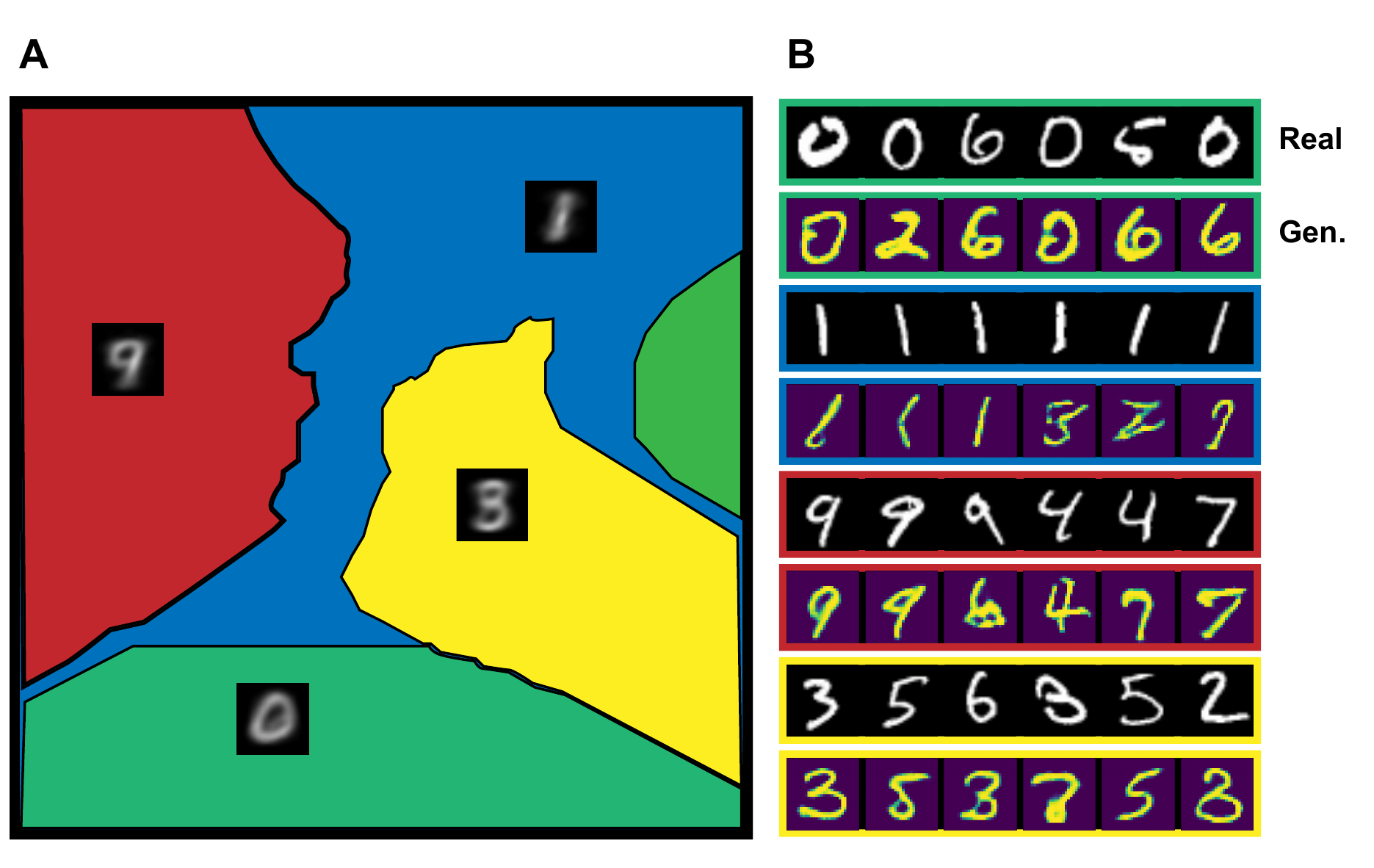}
    \caption{Results on MNIST with k = 4. A) Partition induced by the prototypes (black and white figures) in the t-SNE space. B) real (top row) and generated (bottom row) images corresponding to each prototype. The color surrounding the images matches the color scheme of the partition.}
    \label{figure 2}
\end{figure}

\subsection{MNIST and fashion MNIST}
We applied the k-GANs algorithm on MNIST and Fashion MNIST. We trained a Wasserstein k-GANs for $k$ ranging from 1 (baseline) to 4. Given our limited computational resources, we could train a single run on both models. Prototypes were initialized using k-means algorithm, and samples were assigned to the nearest prototype in batches of 100 during training. We used a 100-dimensional latent space for each of our generators. We used the following generator network architecture: a fully connected input layer of 12544 units with batch normalization and leaky ReLU activation function (output of which was reshaped to 256 x 7 x 7), followed by three deconvolution layers of 128, 64 and 1 units. First two had batch normalization and leaky ReLUs, while the last one had sigmoid activation. All of them had 5 x 5 kernels with a stride of 2 x 2 except for the first which had a stride of 1 x 1. The discriminator network had two convolutional layers with 64 and 128 units of size 5 x 5 and stride 2 x 2 and a linear layer of with a single output unit. Figure~\ref{figure 2} shows the results corresponding to $k = 4$. The figure shows the partition of the image space embedded into a 2D plane using t-SNE embedding. The images inside the sets are their prototypes. Figure~\ref{figure 2}B shows the real images and generated samples corresponding to each prototype. Figure \ref{figure 3} shows prototypes and samples on MNIST and fashion MNIST for the baseline and k-GANs with $k=4$. The k-GANs produced diversified samples except for one of the generator in fashion MNIST that collapsed on its mode. On the other hand, both the baseline models suffered from severe mode collapse. While it is difficult to draw strong conclusions from a single run, the results suggest that the k-GANs approach improves the stability of the base model.

\begin{figure}[h]
    \centering
    \includegraphics[width=0.7\textwidth]{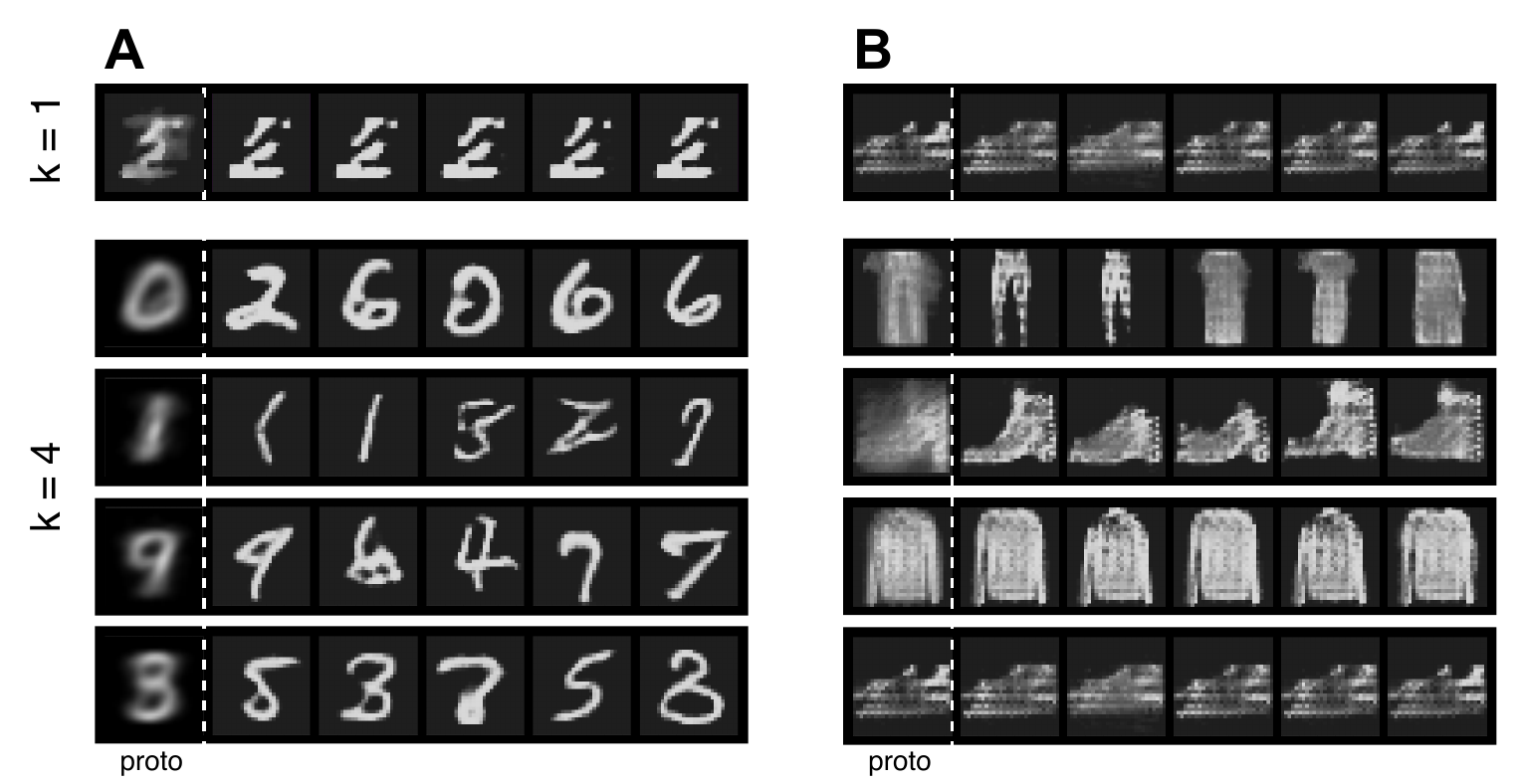}
    \caption{Samples of k-GANs and baselines for MNIST and fashion MNIST.}
    \label{figure 3}
\end{figure}

\section{Discussion}
In this paper we introduce method for training an ensemble of generators based on semi-discrete optimal transport theory. Each generator of the ensemble is associated to a prototype. These prototypes induce a partition of the data space and each set of the partition is modeled by an individual generator.  This protects the algorithm from mode collapse as each generator only needs to cover a localized portion of the data space.

\bibliographystyle{unsrtnat}
\bibliography{reference}

\begin{thebibliography}{25}
\providecommand{\natexlab}[1]{#1}
\providecommand{\url}[1]{\texttt{#1}}
\expandafter\ifx\csname urlstyle\endcsname\relax
  \providecommand{\doi}[1]{doi: #1}\else
  \providecommand{\doi}{doi: \begingroup \urlstyle{rm}\Url}\fi

\bibitem[Peyr{\'e} and Cuturi(2017)]{peyre2017computational}
G.~Peyr{\'e} and M.~Cuturi.
\newblock \emph{Computational Optimal Transport}.
\newblock Now Publishers, Inc., 2017.

\bibitem[Cuturi(2013)]{cuturi2013sinkhorn}
M.~Cuturi.
\newblock {S}inkhorn distances: Lightspeed computation of optimal transport.
\newblock \emph{Advances in Neural Information Processing Systems}, 2013.

\bibitem[Solomon et~al.(2014)Solomon, Rustamov, Guibas, and
  Butscher]{solomon2014wasserstein}
J.~Solomon, R.~Rustamov, L.~Guibas, and A.~Butscher.
\newblock {W}asserstein propagation for semi-supervised learning.
\newblock \emph{International Conference on Machine Learning}, 2014.

\bibitem[Kloeckner(2015)]{kloeckner2015geometric}
B.~R. Kloeckner.
\newblock A geometric study of {{W}}asserstein spaces: Ultrametrics.
\newblock \emph{Mathematika}, 61\penalty0 (1):\penalty0 162--178, 2015.

\bibitem[Ho et~al.(2017)Ho, Nguyen, Y., B., Huynh, and Phung]{ho2017multilevel}
N.~Ho, X.~Long Nguyen, Mikhail Y., Hung~H. B., V.~Huynh, and D.~Phung.
\newblock Multilevel clustering via {{W}}asserstein means.
\newblock \emph{International Conference on Machine Learning}, 2017.

\bibitem[Arjovsky et~al.(2017)Arjovsky, Chintala, and
  Bottou]{arjovsky2017wasserstein}
M.~Arjovsky, S.~Chintala, and L.~Bottou.
\newblock {{W}}asserstein generative adversarial networks.
\newblock \emph{International Conference on Machine Learning}, 2017.

\bibitem[Patrini et~al.(2018)Patrini, Carioni, Forre, Bhargav, {W}elling, Berg,
  Genewein, and Nielsen]{patrini2018sinkhorn}
G.~Patrini, M.~Carioni, P.~Forre, S.~Bhargav, M.~{W}elling, R.~Berg,
  T.~Genewein, and F.~Nielsen.
\newblock Sinkhorn autoencoders.
\newblock \emph{arXiv preprint arXiv:1810.01118}, 2018.

\bibitem[Lee and Raginsky(2018)]{lee2018minimax}
J.~Lee and M.~Raginsky.
\newblock Minimax statistical learning with wasserstein distances.
\newblock \emph{Advances in Neural Information Processing Systems}, 2018.

\bibitem[A. et~al.(2018)A., P., and Cuturi]{genevay18a}
Genevay A., Gabriel P., and M.~Cuturi.
\newblock Learning generative models with sinkhorn divergences.
\newblock \emph{Proceedings of the International Conference on Artificial
  Intelligence and Statistics}, 2018.

\bibitem[Staib et~al.(2017)Staib, Claici, Solomon, and
  Jegelka]{staib2017parallel}
M.~Staib, S.~Claici, J.~M. Solomon, and S.~Jegelka.
\newblock Parallel streaming wasserstein barycenters.
\newblock \emph{Advances in Neural Information Processing Systems}, pages
  2647--2658, 2017.

\bibitem[Ambrogioni et~al.(2018)Ambrogioni, G{\"u}{\c{c}}l{\"u},
  G{\"u}{\c{c}}l{\"u}t{\"u}rk, Hinne, and van
  Gerven]{ambrogioni2018wasserstein}
L.~Ambrogioni, U.~G{\"u}{\c{c}}l{\"u}, Y.~G{\"u}{\c{c}}l{\"u}t{\"u}rk, Maris~E.
  Hinne, M., and M.~A.~J. van Gerven.
\newblock {W}asserstein variational inference.
\newblock \emph{Advances in Neural Information Processing Systems}, pages
  2473--2482, 2018.

\bibitem[Mi et~al.(2018)Mi, Zhang, Gu, and {W}ang]{mi2018variational}
L.~Mi, {W}. Zhang, X.~Gu, and Y.~{W}ang.
\newblock Variational {W}asserstein clustering.
\newblock \emph{European Conference on Computer Vision}, 2018.

\bibitem[Tolstikhin et~al.(2018)Tolstikhin, Bousquet, Gelly, and
  Schoelkopf]{tolstikhin2017wasserstein}
I.~Tolstikhin, O.~Bousquet, S.~Gelly, and Be. Schoelkopf.
\newblock {W}asserstein auto-encoders.
\newblock \emph{International Conference on Machine Learning}, 2018.

\bibitem[Gulrajani et~al.(2017)Gulrajani, Ahmed, Arjovsky, Dumoulin, and
  Courville]{gulrajani2017improved}
I.~Gulrajani, F.~Ahmed, M.~Arjovsky, V.~Dumoulin, and A.~C. Courville.
\newblock Improved training of wasserstein gans.
\newblock \emph{Advances in Neural Information Processing Systems}, 2017.

\bibitem[Adler and Lunz(2018)]{adler2018banach}
J.~Adler and S.~Lunz.
\newblock Banach {W}asserstein gan.
\newblock \emph{Advances in Neural Information Processing Systems}, 2018.

\bibitem[Genevay et~al.(2017)Genevay, Peyr{\'e}, and Cuturi]{genevay2017gan}
A.~Genevay, G.~Peyr{\'e}, and M.~Cuturi.
\newblock Gan and vae from an optimal transport point of view.
\newblock \emph{arXiv preprint arXiv:1706.01807}, 2017.

\bibitem[Gemici et~al.(2018)Gemici, Akata, and {W}elling]{gemici2018primal}
M.~Gemici, Z.~Akata, and M.~{W}elling.
\newblock Primal-dual {W}asserstein gan.
\newblock \emph{arXiv preprint arXiv:1805.09575}, 2018.

\bibitem[{W}ang et~al.(2016){W}ang, Zhang, and van~de
  {W}eijer]{wang2016ensembles}
Y.~{W}ang, L.~Zhang, and J.~van~de {W}eijer.
\newblock Ensembles of generative adversarial networks.
\newblock \emph{arXiv preprint arXiv:1612.00991}, 2016.

\bibitem[Forgy(1965)]{forgy1965cluster}
E.~{W}. Forgy.
\newblock Cluster analysis of multivariate data: efficiency versus
  interpretability of classifications.
\newblock \emph{Biometrics}, 21:\penalty0 768--769, 1965.

\bibitem[Graf and Luschgy(2007)]{graf2007foundations}
S.~Graf and H.~Luschgy.
\newblock \emph{Foundations of Quantization for Probability Distributions}.
\newblock Springer, 2007.

\bibitem[Kingma and Welling(2014)]{kingma2014auto}
D.~P. Kingma and M.~Welling.
\newblock Auto-encoding variational {B}ayes.
\newblock \emph{International Conference on Learning Representation}, 2014.

\bibitem[Sundermeyer et~al.(2012)Sundermeyer, Schl{\"u}ter, and
  Ney]{sundermeyer2012lstm}
M.~Sundermeyer, R.~Schl{\"u}ter, and H.~Ney.
\newblock Lstm neural networks for language modeling.
\newblock \emph{Annual conference of the international speech communication
  association}, 2012.

\bibitem[Hathaway et~al.(2000)Hathaway, Bezdek, and
  Hu]{hathaway2000generalized}
R.~J. Hathaway, J.~C Bezdek, and Y.~Hu.
\newblock Generalized fuzzy c-means clustering strategies using l\~{} p norm
  distances.
\newblock \emph{IEEE transactions on Fuzzy Systems}, 8\penalty0 (5):\penalty0
  576--582, 2000.

\bibitem[Xiao et~al.(2017)Xiao, Rasul, and Vollgraf]{xiao2017fashion}
H.~Xiao, K.~Rasul, and R.~Vollgraf.
\newblock Fashion-mnist: a novel image dataset for benchmarking machine
  learning algorithms.
\newblock \emph{arXiv preprint arXiv:1708.07747}, 2017.

\bibitem[LeCun et~al.(1998)LeCun, Bottou, Bengio, Haffner,
  et~al.]{lecun1998gradient}
Y.~LeCun, L.~Bottou, Y.~Bengio, P.~Haffner, et~al.
\newblock Gradient-based learning applied to document recognition.
\newblock \emph{Proceedings of the IEEE}, 86\penalty0 (11):\penalty0
  2278--2324, 1998.

\end{thebibliography}

\end{document}